\documentclass[sigconf, nonacm]{acmart}
\AtBeginDocument{%
  }

\setcopyright{acmlicensed}
\copyrightyear{2026}
\acmYear{2026}
\acmDOI{XXXXXXX.XXXXXXX}
\acmConference[MM '26]{Make sure to enter the correct
  conference title from your rights confirmation email}{10–14 November 2026}{Rio de Janeiro, Brazil}
\acmISBN{978-1-4503-XXXX-X/2018/06}

\newtheorem{definition}{Definition}
\newtheorem{theorem}{Theorem}
\usepackage{hyperref}
\usepackage{natbib}
\usepackage{pgfplots}
\usepackage{xcolor}

\definecolor{linkpink}{RGB}{255,20,147}
\definecolor{emailblue}{RGB}{0,0,255}
\AtEndPreamble{%
  \hypersetup{
    colorlinks=true,
    linkcolor=linkpink,
    citecolor=linkpink,
    urlcolor=linkpink,
    filecolor=linkpink
  }%
}

\usepackage{siunitx}
\newtheorem{innercustomthm}{Theorem}
\newenvironment{customthm}[1]
  {\renewcommand\theinnercustomthm{#1}\innercustomthm}
  {\endinnercustomthm}
\begin{document}
%%
%% The "title" command has an optional parameter,
%% allowing the author to define a "short title" to be used in page headers.
\title{Eulerian Motion Guidance: Robust Image Animation via Bidirectional Geometric Consistency}

\author{%
Thong Nguyen\textsuperscript{*} \quad
Khoi M. Le \quad
Cong-Duy Nguyen \\
Luu Anh Tuan \quad
See-Kiong Ng \quad
Chunyan Miao
}

\affiliation[obeypunctuation=true]{%
  \institution{National University of Singapore}, \country{Singapore}%
}
\affiliation[obeypunctuation=true]{%
  \institution{Nanyang Technological University}, \country{Singapore}%
}
\affiliation[obeypunctuation=true]{%
  \institution{Centre for AI Research, VinUniversity}, \country{Vietnam}%
}

\email{thong.nguyen@u.nus.edu}

\renewcommand{\shortauthors}{Nguyen et al.}
%%
%% The "author" command and its associated commands are used to define
%% the authors and their affiliations.
%% Of note is the shared affiliation of the first two authors, and the
%% "authornote" and "authornotemark" commands
%% used to denote shared contribution to the research.

%%
%% By default, the full list of authors will be used in the page
%% headers. Often, this list is too long, and will overlap
%% other information printed in the page headers. This command allows
%% the author to define a more concise list
%% of authors' names for this purpose.
% \renewcommand{\shortauthors}{Trovato et al.}

%%
%% The abstract is a short summary of the work to be presented in the
%% article.
\begin{abstract}
Recent advancements in image animation have utilized diffusion models to breathe life into static images. However, existing controllable frameworks typically rely on Lagrangian motion guidance, where optical flow is estimated relative to the initial frame. This paper revisits the same optical-flow primitive through a more local supervision design: we use adjacent-frame Eulerian motion fields to guide generation, where the motion signal always describes a short temporal hop. This shift enables parallelized training and provides bounded-error supervision throughout the generation process. To mitigate the drift artifacts common in adjacent frame generation, we introduce a Bidirectional Geometric Consistency mechanism, which computes a forward-backward cycle check to mathematically identify and mask occluded regions, preventing the model from learning incorrect warping objectives. Extensive experiments demonstrate that our approach accelerates training, preserves temporal coherence, and reduces dynamic artifacts compared to reference-based baselines. The code, model, and data have been made available at \href{https://nguyentthong.github.io/eulerian/}{nguyentthong.github.io/eulerian}.
\end{abstract}

%%
%% The code below is generated by the tool at http://dl.acm.org/ccs.cfm.
%% Please copy and paste the code instead of the example below.
%%
\begin{CCSXML}
<ccs2012>
   <concept>
       <concept_desc>Computing methodologies~Motion capture</concept_desc>
       <concept_significance>500</concept_significance>
       </concept>
   <concept>
       <concept_id>10010147.10010371.10010352.10010378</concept_id>
       <concept_desc>Computing methodologies~Procedural animation</concept_desc>
       <concept_significance>500</concept_significance>
       </concept>
   <concept>
       <concept_id>10003752.10010070</concept_id>
       <concept_desc>Theory of computation~Theory and algorithms for application domains</concept_desc>
       <concept_significance>500</concept_significance>
       </concept>
 </ccs2012>
\end{CCSXML}

\ccsdesc[500]{Computing methodologies~Motion capture}
\ccsdesc[500]{Computing methodologies~Procedural animation}
\ccsdesc[500]{Theory of computation~Theory and algorithms for application domains}

%%
%% Keywords. The author(s) should pick words that accurately describe
%% the work being presented. Separate the keywords with commas.
\keywords{Image Animation, Motion Guidance, Geometric Consistency}
%% A "teaser" image appears between the author and affiliation
%% information and the body of the document, and typically spans the
%% page.

%%
%% This command processes the author and affiliation and title
%% information and builds the first part of the formatted document.
%% acmart typesets the author email via \nolinkurl (no hyperlink, so hyperref
%% colors don't apply). Temporarily wrap \nolinkurl to color the email blue.
\begingroup
\let\ACMorigNoLinkURL\nolinkurl
\renewcommand{\nolinkurl}[1]{\textcolor{emailblue}{\ACMorigNoLinkURL{#1}}}
\maketitle
\endgroup

\begingroup
\renewcommand{\thefootnote}{*}
\footnotetext{Corresponding author.}
\endgroup

\section{Introduction}
\begin{figure*}
    \centering
    \includegraphics[width=0.7\linewidth]{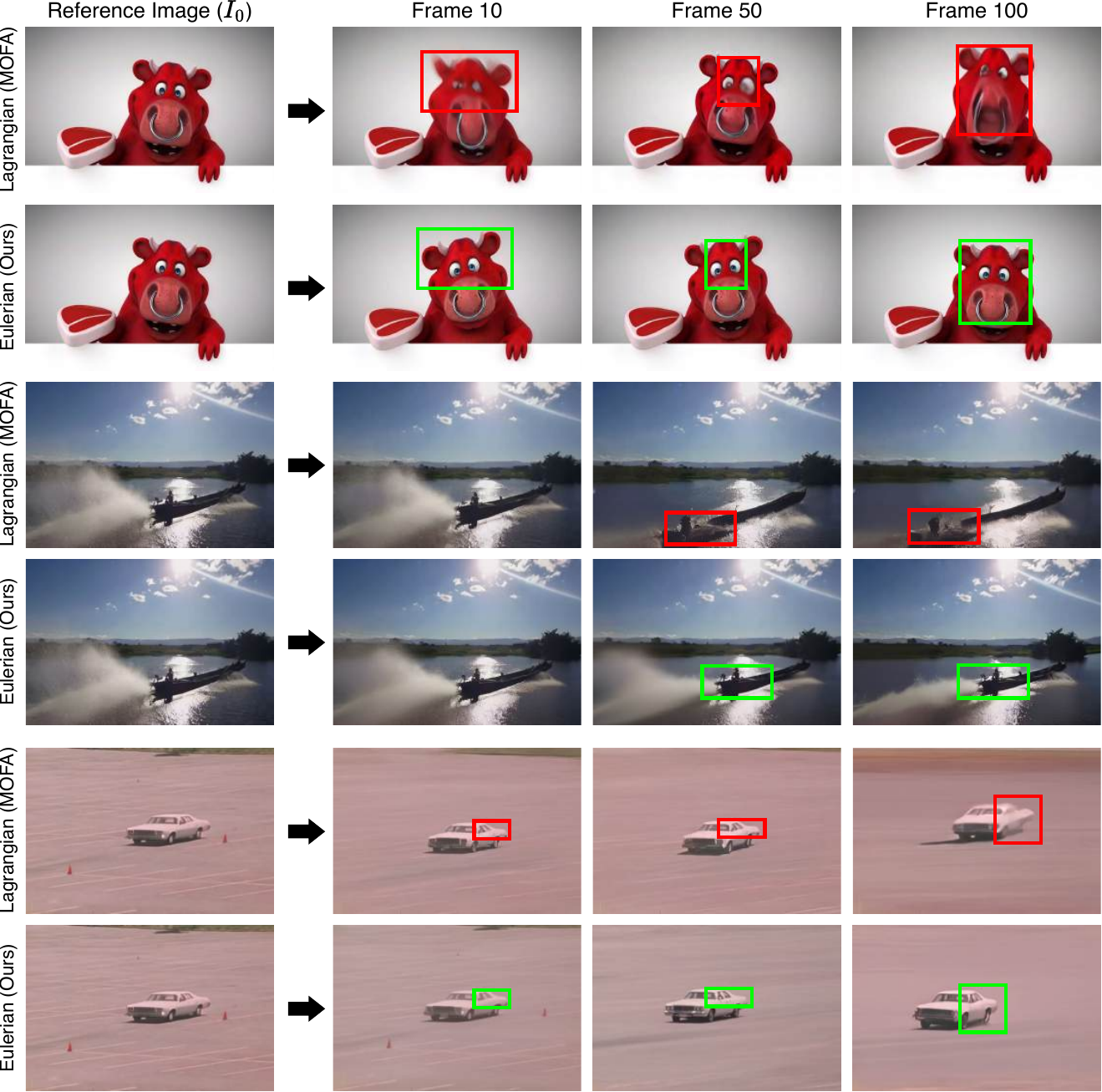}
    \caption{Long-horizon qualitative comparison. We show the reference image and generated frames at $t = \{10, 50, 100\}$. The Lagrangian baseline MOFA \citep{niu2024mofa} exhibits increasing texture drift or blur as temporal distance grows, whereas our Eulerian motion framework better preserves identity and geometry over time.}
    \label{fig:qualitative_comparison_mofa}
\end{figure*}
The pursuit of ``\textit{bringing images to life}'' has long been a fascination in computer vision, evolving from early Generative Adversarial Network (GAN)-based approaches \citep{vondrick2016generating} to recent controllable diffusion models \citep{hu2024animate, zheng2023layoutdiffusion}. While current Image-to-Video (I2V) frameworks such as Stable Video Diffusion \citep{blattmann2023stable} and Lumiere \citep{bar2024lumiere} can generate plausible short clips, ensuring robust long-term temporal consistency under complex motion remains a formidable challenge. The core difficulty lies in maintaining the identity and structure of the initial image while adhering to complex motion dynamics over time.

Existing methods \citep{liang2024movideo, niu2024mofa, wang2024motionctrl} address this by integrating explicit motion control into frozen diffusion models.  However, these approaches predominantly adopt a Lagrangian motion formulation, where optical flow is computed relative to the initial reference frame \citep{fraser2005leonhard}. While conceptually simple, we identify two inherent structural vulnerabilities in this reference-anchored paradigm that make it highly susceptible to error accumulation. First, as the timestep increases, the displacement magnitude grows, often violating the brightness constancy assumptions inherent in optical flow estimation \citep{teed2020raft} and leading to increasingly noisy supervisory signals. Second, supervisory sparsity: as objects move or rotate, the valid correspondence area between the initial frame and the current frame decays exponentially due to occlusions, leaving the model hallucinating in unseen regions without guidance.

To mitigate these limitations, we propose a shift from Lagrangian-based to Eulerian-based Motion Guidance. Instead of tracking pixels relative to the start, we model motion as a dense field of transitions between consecutive frames. We provide a theoretical framework demonstrating how this formulation keeps optical flow estimates within the reliable, short-range regime of estimators, effectively bounding the per-step supervisory error. Moreover, this adjacent formulation allows for a parallelized batched flow computation strategy that minimizes sequential training overhead.

However, while Eulerian motion guidance provides stable local gradients, enforcing consistency only between neighboring frames can still lead to stochastic drift in regions that are newly revealed over time. In these dis-occluded areas, there is no valid correspondence to the previous frame; forcing the model to guess the appearance often results in shimmering or ghosting artifacts. To address this inherent limitation of adjacent-frame tracking, we introduce a novel Bidirectional Geometric Consistency mechanism. By computing flow fields in both forward and backward directions, we enforce a geometric cycle-consistency check, allowing us to mathematically derive an occlusion mask that dynamically filters out unreliable gradients in dis-occluded regions, ensuring that the model is supervised only by geometrically valid correspondences.

In summary, our contributions are three-fold:
\begin{itemize}
\item We formally analyze the error-accumulation vulnerabilities of Lagrangian motion guidance in video generation and propose an Eulerian alternative designed to bound per-step supervisory error.
\item We introduce a Bidirectional Geometric Consistency mechanism that leverages forward-backward cycle checks to robustly handle occlusions and prevent artifact generation in dynamic scenes.
\item We implement batched flow computation that integrates seamlessly with pre-trained diffusion backbones, achieving state-of-the-art performance in temporal consistency and visual fidelity.
\end{itemize}
\section{Related Work}
\begin{figure*}[t]
    \centering
    \includegraphics[width=0.85\linewidth]{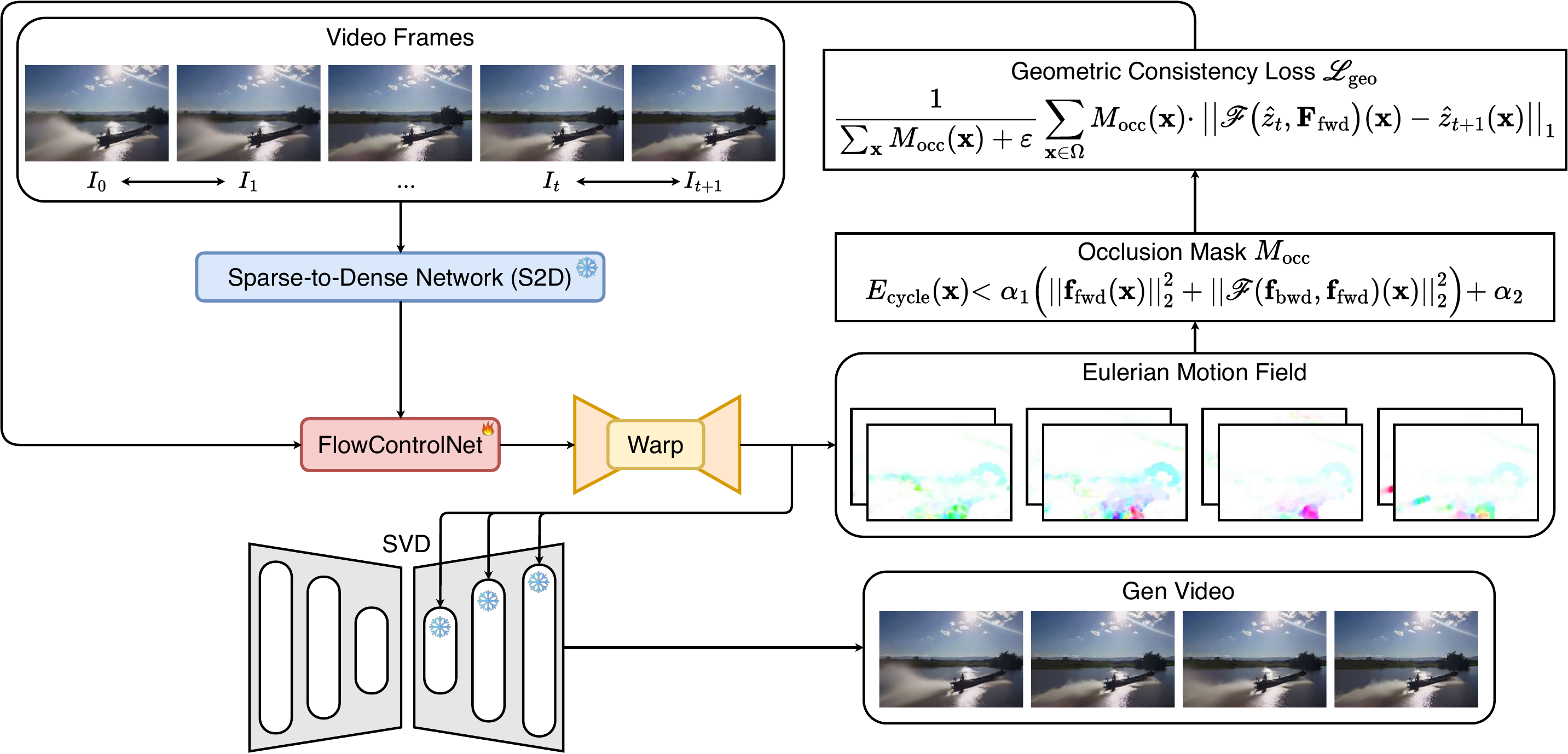}
    \caption{Eulerian Motion Guidance with Bidirectional Geometric Consistency. Given a reference image and sparse motion hints, a sparse-to-dense (S2D) module predicts an Eulerian motion field, which we then inject via Flow ControlNet by warping multi-scale reference features in a frozen SVD backbone. During training, forward and backward flows produce a cycle-based occlusion mask that gates geometric consistency loss, preventing supervision on dis-occluded regions.}
    \label{fig:overall_paradigm}
\end{figure*}

\begin{figure}[t]
    \centering
    \includegraphics[width=0.85\linewidth]{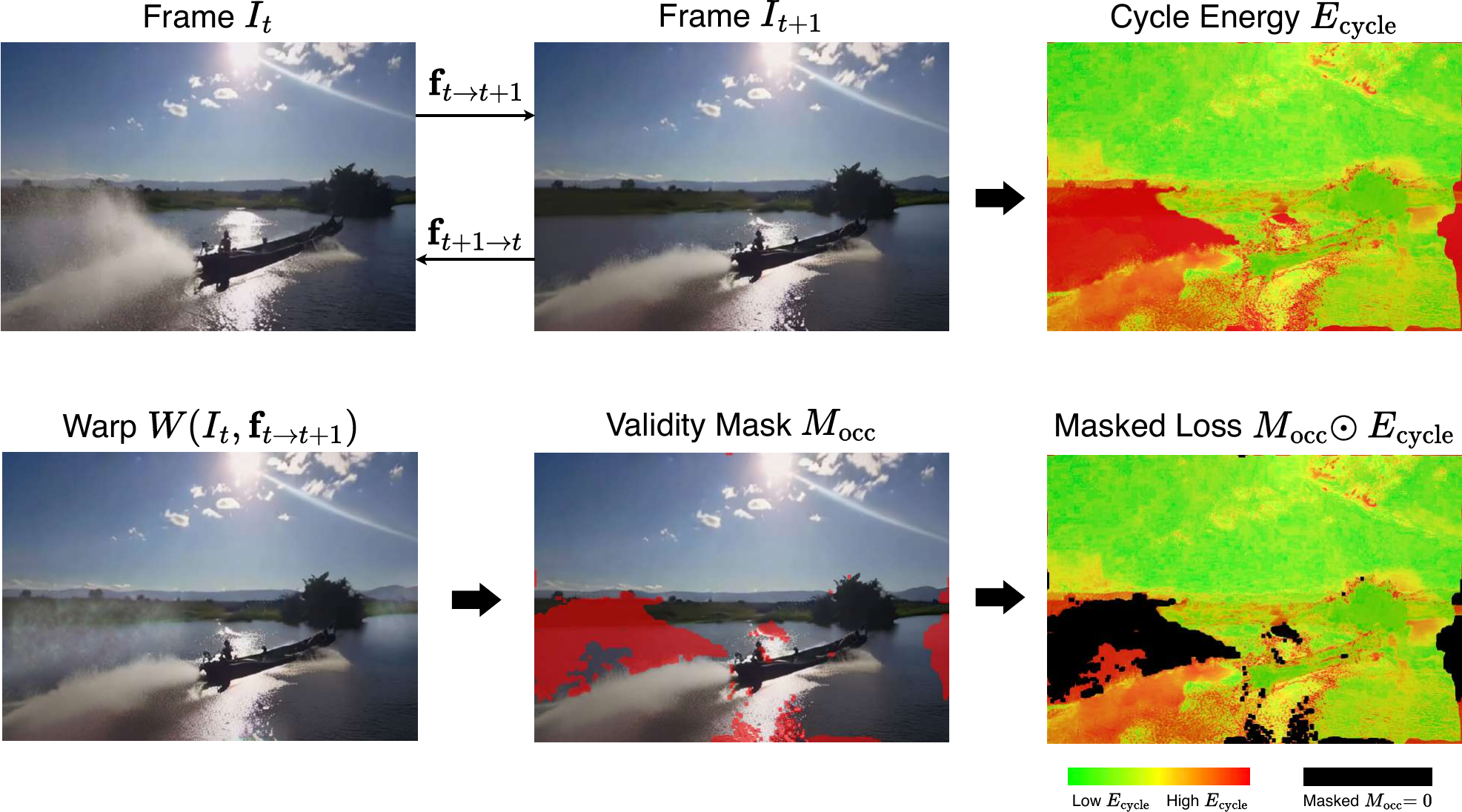}
    \caption{Occlusion masking from bidirectional cycle energy. For consecutive frames, we compute forward and backward flows, and evaluate cycle energy, for which high values indicate occlusion or flow failure. Thresholding yields a binary validity or occlusion mask, which is applied to the warp-based consistency loss so gradients are propagated only through geometrically verifiable correspondences.}
    \label{fig:occlusion_mask}
    \vspace{-15pt}
\end{figure}

\subsection{Controllable Video Generation.}
Recent large-scale video generation models, such as Veo3 \citep{wiedemer2025video} and Wan2.1 \citep{wan2025wan}, have demonstrated remarkable fidelity in text-to-video synthesis. However, fine-grained control remains a challenge. Existing controllable frameworks typically inject guidance via additional control layers or adapters. For trajectory-based control, early works like DragNUWA \citep{yin2023dragnuwa} and MotionCtrl \citep{wang2024motionctrl} utilize sparse tracking points to guide generation. More recently, ATI \citep{wang2025ati} propose a unified trajectory instruction mechanism for camera and object motion, while PoseTraj \citep{ji2025posetraj} introduce pose-aware guidance to handle object rotations better. Despite these advancements, most prior methods rely on Lagrangian particle tracking-anchoring motion to the initial frame, which we argue leads to error accumulation in long-horizon generation. Go-with-the-flow \citep{burgert2025go} recently integrates optical flow into diffusion noise for motion control. However, unlike our explicit Eulerian flux, they operate via real-time noise warping, which can be sensitive to noise distribution shifts.

\subsection{Portrait and Keypoint-Based Animation}
Animating portraits from static images requires precise identity preservation. Foundational works in consistent animation, such as MagicAnimate \citep{xu2024magicanimate}, SadTalker \citep{zhang2023sadtalker}, Cinemo \citep{ma2024cinemo}, and Hallo3 \citep{cui2025hallo3}, tackle consistent and controllable image animation using dense guidance and spatial-temporal attention mechanisms. Specialized architectures like EchoMimic \citep{chen2025echomimic} and FactorPortrait \citep{tang2025factorportrait} achieve state-of-the-art disentanglement using implicit control. While these methods achieve high fidelity through domain-specific priors, our work demonstrates that a general-purpose Eulerian flow field can achieve competitive structural consistency across broader domains.

\subsection{Geometric Consistency and Occlusion Handling}
Maintaining temporal consistency in generative video, particularly under occlusion, is a long-standing problem. Traditional approaches in video magnification, such as MagFormer \citep{gao2022magformer}, have explored combining Lagrangian and Eulerian perspectives. Furthermore, previous generative pipelines have leveraged occlusion maps, such as LFDM \cite{ni2023conditional} for latent flow diffusion and FlowVid \cite{liang2023flowvid} for forward-backward consistency in video editing. Confidence-based warping and occlusion-aware self-supervision have also been extensively explored in Video Frame Interpolation (VFI) \citep{Jeong_2024_CVPR}. 

While these methods utilize similar forward-backward checks, our Bidirectional Geometric Consistency (BGC) uniquely integrates this cycle-energy threshold directly into the diffusion training loop to gate dis-occluded gradients. By enforcing a forward-backward cycle check directly in the flow field, we achieve robust occlusion handling and texture preservation without the computational overhead of explicit 3D reconstruction seen in recent methods like Unboxed \citep{yu2025unboxed} or InsertAnywhere \citep{jin2025insertanywhere}.

\section{Preliminaries}
\label{sect:preliminaries}

\noindent\textbf{Stable Video Diffusion.} We employ \emph{Stable Video Diffusion (SVD)} as our frozen generative backbone. SVD operates in a compressed latent space derived from a pre-trained VAE encoder, $z_0 = \mathcal{E}(V)$. The generation process reverses a noise perturbation chain via a 3D U-Net denoiser $S_{\theta}(z_t, t, c)$, which predicts the velocity field $v_{\theta}$ given a noisy latent $z_t$ and a context embedding $c$, e.g., CLIP image embedding. 

\noindent\textbf{Controllable Image Animation.}
To inject motion control, we integrate a learnable motion adapter $\mathcal{M}$ into the frozen $S_{\theta}$. The adapter transforms sparse user motion hints $F^s$ into a dense flow field, which spatially aligns multi-scale reference features $f_0^{(k)}$ to the target timestep $t$ via a differentiable warping operator $\mathcal{W}$. We optimize only the adapter parameters $\theta_{\mathcal{M}}$ using a standard reconstruction objective $\mathcal{L}_{\text{control}}$, keeping the backbone frozen.

\section{Method}
In this section, we delineate our Eulerian Motion Guidance (EMG) framework for image animation. We begin by establishing the theoretical intuition behind our approach through a formal analysis of expected error propagation in optical flow fields. Subsequently, we detail the Bidirectional Geometric Consistency regularizer, formulated as an energy minimization problem. Eventually, we demonstrate the parallelized motion computation for training efficiency and our inference pipeline for image animation.

\subsection{From Lagrangian to Eulerian Motion Field}
Most image animation frameworks guide motion in a Lagrangian manner: they estimate a reference-to-target displacement field from the reference frame at $t=0$ to a future time $t$, and warp the reference latent accordingly. Concretely, given the reference image $I_0$ and its latent $z_0 = \mathcal{E}(I_0)$, they synthesize a motion-guided latent at time $t$ via
\begin{equation}
\hat{z}_t^{\mathrm{Lag}} = \mathcal{W}\!\left(z_0, \hat{\mathbf{u}}_{0\rightarrow t}\right),
\end{equation}
where $\mathcal{W}(\cdot,\cdot)$ is a differentiable warping operator (e.g., bilinear sampling), and $\hat{\mathbf{u}}_{0\rightarrow t}:\mathrm{\Omega}\rightarrow\mathbb{R}^2$ is the \emph{estimated} displacement field defined on the reference coordinates.

\begin{definition}[Lagrangian motion field]
Let $\mathrm{\Omega} \subset \mathbb{R}^{2}$ denote the spatial domain of the reference frame at $t=0$.
The Lagrangian motion field is a reference-anchored displacement function
$\mathbf{u}_{\mathrm{Lag}}:\mathrm{\Omega}\times\mathbb{R}^{+}\rightarrow\mathbb{R}^{2}$ that maps each pixel coordinate
$\mathbf{x}\in\mathrm{\Omega}$ at $t=0$ to its cumulative displacement at time $t$. The corresponding location at time $t$ is
\begin{equation}
\mathbf{x}_t = \mathbf{x} + \mathbf{u}_{\mathrm{Lag}}(\mathbf{x}, t).
\end{equation}
For discrete frames, the reference-to-$t$ displacement used in warping is exactly
$\mathbf{u}_{0\rightarrow t}(\mathbf{x}) \equiv \mathbf{u}_{\mathrm{Lag}}(\mathbf{x}, t)$.
\end{definition}

\noindent While conceptually simple, the reference-to-$t$ formulation becomes increasingly vulnerable to error as $t$ grows:
(i) the displacement magnitude $\|\mathbf{u}_{0\rightarrow t}\|$ typically increases, making estimation and sampling more sensitive to error; and
(ii) the set of reference pixels with valid correspondences,
$\mathrm{\Omega}_{\mathrm{valid}}(t)\subseteq\mathrm{\Omega}$, shrinks due to occlusion and out-of-bound mappings.
Both effects amplify uncertainty in $\hat{\mathbf{u}}_{0\rightarrow t}$ and thus degrade the warped latent $\hat{z}_t^{\mathrm{Lag}}$ over long horizons.

\begin{theorem}[Expected Error Accumulation in Lagrangian Motion Fields]
\label{theorem_1}
Let $\mathbf{u}^{*}_{0\to t}$ be the ground-truth displacement from frame $0$ to frame $t$. Under the standard assumption that the estimator $\hat{\mathbf{u}}_{0\to t} = \mathbf{u}^{*}_{0\to t} + \boldsymbol{\epsilon}_{t}$ exhibits bounded kurtosis $\kappa$ and an error variance that scales with the temporal baseline (due to growing deformation and decreasing correspondences), such that $\mathbb{E}[\boldsymbol{\epsilon}_{t}]=\mathbf{0}$ and $\mathbb{E}[\|\boldsymbol{\epsilon}_{t}\|^{2}] \ge \sigma^{2} t$ for some $\sigma>0$, the expected endpoint error is lower bounded by
\begin{equation}
\mathbb{E}\!\left[\left\|\hat{\mathbf{u}}_{0\to t}-\mathbf{u}^{*}_{0\to t}\right\|\right]
\;\ge\; \sigma \sqrt{t}.
\end{equation}
\end{theorem}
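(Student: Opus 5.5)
The plan is to reduce the claim to a moment comparison for the scalar random variable $X_t := \|\boldsymbol{\epsilon}_t\| = \|\hat{\mathbf{u}}_{0\to t}-\mathbf{u}^{*}_{0\to t}\|$. The left-hand side of the theorem is exactly $\mathbb{E}[X_t]$, and the hypothesis controls $\mathbb{E}[X_t^2]$ from below. Jensen's inequality gives $\mathbb{E}[X_t] \le \sqrt{\mathbb{E}[X_t^2]}$, which is the wrong direction. The lower bound must therefore come from the kurtosis hypothesis, which rules out the error mass concentrating on a vanishing set of very large values.

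\textbf{Step 1 (Hölder interpolation).} Write $X_t^2 = X_t^{2/3}\cdot X_t^{4/3}$ and apply Hölder with exponents $3/2$ and $3$:
\begin{equation*}
\mathbb{E}[X_t^2] \le \bigl(\mathbb{E}[X_t]\bigr)^{2/3}\bigl(\mathbb{E}[X_t^4]\bigr)^{1/3}.
\end{equation*}

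\textbf{Step 2 (inserting the kurtosis bound).} Take the bounded-kurtosis assumption in the form $\mathbb{E}[X_t^4] \le \kappa\,(\mathbb{E}[X_t^2])^2$. Substituting it into Step 1 and rearranging gives
\begin{equation*}
\mathbb{E}[X_t] \ge \frac{\sqrt{\mathbb{E}[X_t^2]}}{\sqrt{\kappa}}.
\end{equation*}
Note that $\kappa \ge 1$ always holds, by Cauchy--Schwarz.

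\textbf{Step 3 (variance growth).} Combine Step 2 with the baseline-scaling hypothesis $\mathbb{E}[\|\boldsymbol{\epsilon}_t\|^2]\ge\sigma^2 t$. This yields $\mathbb{E}[X_t]\ge \kappa^{-1/2}\sigma\sqrt{t}$.

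\textbf{Step 4 (reaching the constant $\sigma$).} Steps 1--3 are routine. The main obstacle is the last step: obtaining the constant $\sigma$ itself, with no loss from $\kappa$.
\begin{itemize}
\item If $\kappa = 1$, the bound in Step 3 is exactly $\sigma\sqrt{t}$. Here $\kappa=1$ forces $X_t$ to be almost surely constant, so Jensen becomes an equality.
\item For $\kappa>1$, the bound $\sigma\sqrt t$ follows from the same chain only if the variance hypothesis holds at the kurtosis-calibrated level $\mathbb{E}[\|\boldsymbol{\epsilon}_t\|^2]\ge \kappa\,\sigma^2 t$. In that case Step 2 gives exactly $\mathbb{E}[X_t]\ge\sigma\sqrt t$.
\end{itemize}
I would therefore state the theorem's hypotheses in this calibrated form, on the reading that the phrase ``variance scales with the baseline with bounded kurtosis'' refers to it.

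I do not expect to remove $\kappa$ under the literal hypothesis. Suppose $\mathbb{E}[\|\boldsymbol{\epsilon}_t\|^2]=\sigma^2 t$ exactly and $\kappa>1$. Jensen's inequality is then strict for non-degenerate $X_t$, so $\mathbb{E}[X_t]<\sigma\sqrt t$. Any correct proof must therefore use $\kappa$ in this calibrated way or assume $\kappa=1$. Either way, the $\sqrt{t}$ growth that motivates Eulerian guidance is unaffected.
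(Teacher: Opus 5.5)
Your argument is correct, and it takes a different route from the paper's. The paper applies Paley--Zygmund to $\|\boldsymbol{\epsilon}_t\|^2$ with $\theta=\tfrac12$. Using the same kurtosis bound, it obtains $\mathbb{P}\bigl(\|\boldsymbol{\epsilon}_t\|^2>\tfrac12\mathbb{E}\|\boldsymbol{\epsilon}_t\|^2\bigr)\ge 1/(4\kappa)$, then lower-bounds the mean on that event, arriving at $\mathbb{E}\|\boldsymbol{\epsilon}_t\|\ge \sigma\sqrt{t}/(4\sqrt{2}\,\kappa)$. Your H\"older interpolation between the first and fourth moments reaches $\kappa^{-1/2}\sigma\sqrt{t}$ in one step. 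This is stronger for every $\kappa\ge 1$: the loss in $\kappa$ is square-root rather than linear, and there is no factor $4\sqrt{2}$. It also cannot be improved under these hypotheses. Equality holds in your H\"older step when $\|\boldsymbol{\epsilon}_t\|$ takes only the values $0$ and $a$. For example, take $\boldsymbol{\epsilon}_t=\mathbf{0}$ with probability $1-p$ and uniform on the circle of radius $a$ otherwise; this is mean-zero with $\kappa=1/p$. What the paper's route gives in exchange is a tail statement: with probability at least $1/(4\kappa)$ the error exceeds $\sigma\sqrt{t/2}$. That says how often large errors occur, not just how large the mean is.

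Your Step~4 diagnosis is also right, and it applies to the paper's proof as well. Neither argument reaches the stated constant $\sigma$, because the statement is false as written. For isotropic Gaussian error with $\mathbb{E}\|\boldsymbol{\epsilon}_t\|^2=\sigma^2 t$ one has $\kappa=2$, yet $\mathbb{E}\|\boldsymbol{\epsilon}_t\|=\tfrac{\sqrt{\pi}}{2}\sigma\sqrt{t}<\sigma\sqrt{t}$. The paper stops at its $\kappa$-dependent constant and recasts the conclusion as $\Omega(\sqrt{t})$ growth without flagging the mismatch. The cleanest fix keeps the original hypotheses and states the conclusion as $\kappa^{-1/2}\sigma\sqrt{t}$, which is exactly what you prove. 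Your $\kappa$-calibrated hypothesis is equivalent, but present it as an explicit modification of the theorem rather than as a reading of the phrase ``bounded kurtosis''.
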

We provide the proof of Theorem \ref{theorem_1} in Appendix \ref{sec:appendix_a}. This growth implies that the supervisory target $\hat{z}^{\mathrm{Lag}}_{t}$ becomes progressively less correlated with the true geometric structure at large $t$, forcing the denoising network to learn under increasingly high-variance guidance. To mitigate this instability, we switch from a Lagrangian motion field to an Eulerian one for image animation. Instead of predicting a single long-horizon displacement $\hat{\mathbf{u}}_{0\rightarrow t}$, we model motion as an instantaneous flux that transports the current state to the next state, and compose local transports over time.

\begin{definition}[Eulerian Motion Field]
Let $\mathrm{\Omega} \subset \mathbb{R}^{2}$ denote the spatial domain. The Eulerian motion field is a velocity function
$\mathbf{u}_{\mathrm{Eul}}:\mathrm{\Omega}\times\mathbb{R}^{+}\rightarrow\mathbb{R}^{2}$ defined at fixed coordinates,
where $\mathbf{u}_{\mathrm{Eul}}(\mathbf{x},t)$ specifies the instantaneous motion (flux) observed at location $\mathbf{x}\in\Omega$ and time $t$.
In continuous time, it induces pixel trajectories $\mathbf{x}(t)$ by the ODE
\begin{equation}
\frac{d\mathbf{x}(t)}{dt}=\mathbf{u}_{\mathrm{Eul}}(\mathbf{x}(t),t),\qquad \mathbf{x}(0)=\mathbf{x}_0.
\end{equation}
For discrete frames, we represent this flux using adjacent-frame displacement (optical flow)
$\mathbf{f}_{\tau\rightarrow\tau+1}:\mathrm{\Omega}\rightarrow\mathbb{R}^2$, yielding the update
\begin{equation}
\mathbf{x}_{t+1}=\mathbf{x}_{t}+\mathbf{f}_{t\rightarrow t+1}(\mathbf{x}_{t}).
\end{equation}
\end{definition}
Let $\mathrm{\Omega}_{\mathrm{valid}}(t)\subseteq\mathrm{\Omega}$ denote the set of pixels at time $t$ that admit valid adjacent correspondences, \textit{i.e.} not occluded and not mapped out of bounds under $\mathbf{f}_{t\to t+1}$. We now formalize how Eulerian supervision bounds the per-step gradient signal.
\begin{theorem}[Uniform Bound on Eulerian Supervisory Error]
\label{theorem_2}
Let $\mathbf{f}^{*}_{t\to t+1}$ be the ground-truth adjacent-frame displacement, and let
$\hat{\mathbf{f}}_{t\to t+1} = \mathbf{f}^{*}_{t\to t+1} + \boldsymbol{\epsilon}_{t}$
be an estimator whose error satisfies $\mathbb{E}[\boldsymbol{\epsilon}_{t}]=\mathbf{0}$ and
\begin{equation}
\mathbb{E}\!\left[\frac{1}{|\mathrm{\Omega}_{\mathrm{valid}}(t)|}\int_{\mathrm{\Omega}_{\mathrm{valid}}(t)}
\left\|\boldsymbol{\epsilon}_{t}(\mathbf{x})\right\|^{2}\, d\mathbf{x}\right] \;\le\; \sigma^{2},
\qquad \forall t,
\end{equation}
for some constant $\sigma>0$ independent of $t$.
Then the expected endpoint error of Eulerian supervision is uniformly bounded:
\begin{equation}
\mathbb{E}\!\left[\frac{1}{|\mathrm{\Omega}_{\mathrm{valid}}(t)|}\int_{\mathrm{\Omega}_{\mathrm{valid}}(t)}
\left\|\hat{\mathbf{f}}_{t\to t+1}(\mathbf{x})-\mathbf{f}^{*}_{t\to t+1}(\mathbf{x})\right\|\, d\mathbf{x}\right]
\;\le\; \sigma,
\qquad \forall t.
\end{equation}
\end{theorem}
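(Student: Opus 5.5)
The bound follows from Jensen's inequality (equivalently Cauchy--Schwarz), applied twice: once for the spatial average over $\mathrm{\Omega}_{\mathrm{valid}}(t)$ and once for the expectation. Fix $t$. By the estimator model in Theorem~\ref{theorem_2}, the integrand is $\|\hat{\mathbf{f}}_{t\to t+1}(\mathbf{x})-\mathbf{f}^{*}_{t\to t+1}(\mathbf{x})\| = \|\boldsymbol{\epsilon}_t(\mathbf{x})\|$. Define the random variables
\begin{equation}
A_t = \frac{1}{|\mathrm{\Omega}_{\mathrm{valid}}(t)|}\int_{\mathrm{\Omega}_{\mathrm{valid}}(t)} \|\boldsymbol{\epsilon}_t(\mathbf{x})\|\,d\mathbf{x},
\qquad
B_t = \frac{1}{|\mathrm{\Omega}_{\mathrm{valid}}(t)|}\int_{\mathrm{\Omega}_{\mathrm{valid}}(t)} \|\boldsymbol{\epsilon}_t(\mathbf{x})\|^2\,d\mathbf{x}.
\end{equation}
The goal is to show $\mathbb{E}[A_t]\le\sigma$, given the hypothesis $\mathbb{E}[B_t]\le\sigma^2$.

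\textbf{Step 1 (spatial Jensen).} The normalized Lebesgue measure on $\mathrm{\Omega}_{\mathrm{valid}}(t)$ is a probability measure, provided $|\mathrm{\Omega}_{\mathrm{valid}}(t)|>0$. I will assume this; it is implicit in the statement, since otherwise the average is undefined. The map $s\mapsto s^2$ is convex, so $A_t^2\le B_t$ pointwise in the randomness, i.e.\ $A_t\le\sqrt{B_t}$. Equivalently, this is Cauchy--Schwarz in $L^2(\mathrm{\Omega}_{\mathrm{valid}}(t))$ against the constant function $1$.

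\textbf{Step 2 (probabilistic Jensen).} Since $\sqrt{\cdot}$ is concave, $\mathbb{E}[\sqrt{B_t}]\le\sqrt{\mathbb{E}[B_t]}\le\sigma$. Combining with Step~1 gives $\mathbb{E}[A_t]\le\sigma$. The constant $\sigma$ does not depend on $t$, so the bound holds uniformly in $t$.

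The zero-mean assumption $\mathbb{E}[\boldsymbol{\epsilon}_t]=\mathbf{0}$ is not needed for the inequality; I will remark that it only makes $\sigma^2$ interpretable as a variance.

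The only delicate point is measurability and well-definedness. If $\mathrm{\Omega}_{\mathrm{valid}}(t)$ is itself random, because it depends on occlusions in the sampled video, then Step~1 is applied realization-wise. It still holds because the spatial Jensen inequality is deterministic for each fixed domain, and $A_t$ and $B_t$ must be measurable and integrable for the expectations to make sense. I expect no real obstacle beyond stating these regularity conditions, namely that $\boldsymbol{\epsilon}_t$ is jointly measurable and square-integrable on the valid set. I will then contrast the result with the $\sigma\sqrt{t}$ growth in Theorem~\ref{theorem_1}.
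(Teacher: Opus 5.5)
Your proof is correct and follows essentially the same route as the paper: Cauchy--Schwarz (spatial Jensen) on $\mathrm{\Omega}_{\mathrm{valid}}(t)$ gives $A_t^2\le B_t$, and then Jensen handles the expectation. The only cosmetic difference is the order of steps: the paper takes expectations first, $\mathbb{E}[A_t^2]\le\mathbb{E}[B_t]\le\sigma^2$, and then uses $\mathbb{E}[A_t]\le\sqrt{\mathbb{E}[A_t^2]}$, whereas you apply concavity of $\sqrt{\cdot}$ to $B_t$ directly.
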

We provide the proof of Theorem \ref{theorem_2} in Appendix \ref{app:proof-thm2}. By minimizing the temporal interval to $\delta t = 1$, we ensure the displacement magnitude $\|\mathbf{f}_{t \to t+1}\|$ remains within the high-fidelity linear regime of the flow estimator, providing a stable, low-variance gradient signal throughout the entire generation window.
\subsection{Bidirectional Geometric Consistency}
\label{subsect:bgc}
While Eulerian formulation bounds the local supervisory error, it introduces a risk of generative stochastic drift. Without a global anchor $I_0$, microscopic errors can accumulate autoregressively, leading to texture sliding in dis-occluded regions where the flow field $\mathbf{f}_{t \to t+1}$ is ill-defined. To regularize this, we propose the Bidirectional Geometric Consistency mechanism, formulated as a pixel-wise energy minimization problem.

\noindent\textbf{Cycle Energy Formulation.} We define the Cycle Consistency Energy $E_{\text{cycle}}(\mathbf{x})$ at coordinate $\mathbf{x} \in \mathrm{\Omega}$ as the residual displacement magnitude after a forward-backward projection cycle:
\begin{equation}
E_{\text{cycle}}(\mathbf{x}) = \left\| \mathbf{f}_{t \to t+1}(\mathbf{x}) + \mathcal{F}\left(\mathbf{f}_{t+1 \to t}, \mathbf{f}_{t \to t+1}\right)(\mathbf{x}) \right\|_2^2,
\end{equation}
where $\mathcal{F}(\mathbf{f}_{\text{bwd}}, \mathbf{f}_{\text{fwd}})$ denotes the backward flow vector sampled at the target location mapped by the forward flow. Non-zero energy, \textit{i.e.} $E_{cycle} \gg 0$, indicates a violation of brightness constancy, serving as a robust proxy for occlusion or estimation failure.

\noindent\textbf{Dynamic Occlusion Masking.} We derive a binary validity mask $M_\text{occ} \in \{0, 1\}^{H \times W}$ by treating the cycle energy as a random variable. To account for relative error tolerance, we employ a relaxed thresholding function parameterized by the local flow magnitude:
\begin{equation}
\begin{split}
M_\text{occ}(\mathbf{x}) &= \mathbb{I} \bigg[ E_{\text{cycle}}(\mathbf{x}) < \alpha_1 \bigg( \|\mathbf{f}_{t \to t+1}(\mathbf{x})\|_2^2 \\
&\quad + \|\mathcal{W}(\mathbf{f}_{t+1 \to t}, \mathbf{f}_{t \to t+1})(\mathbf{x})\|_2^2 \bigg) + \alpha_2 \bigg],
\end{split}
\end{equation}
where $\mathbb{I}[\cdot]$ is the indicator function, and $\alpha_1, \alpha_2$ are hyperparameters governing the sensitivity of the decision boundary. This mask effectively partitions the spatial domain into the geometrically verifiable $\mathrm{\Omega}_{\text{valid}}$ and the occluded $\mathrm{\Omega}_{\text{occ}}$.

\noindent\textbf{Geometric Consistency Loss.} The final objective function integrates this geometric prior. We define the Geometric Consistency Loss $\mathcal{L}_\text{geo}$ as a spatially-weighted reconstruction objective, ensuring the generated latent $\hat{z}_{t+1}$ adheres to the Eulerian flux only within valid regions:
\begin{equation}
\label{eq:bidirectional_geometric_consistency_loss}
\mathcal{L}_\text{geo} = \frac{1}{\sum_{\mathbf{x}} M_\text{occ}(\mathbf{x}) + \epsilon} \sum_{\mathbf{x} \in \mathrm{\Omega}} M_\text{occ}(\mathbf{x}) \cdot \left\| \mathcal{W}(\hat{z}_t, \mathbf{f}_{t \rightarrow t+1})(\mathbf{x}) - \hat{z}_{t+1}(\mathbf{x}) \right\|_1.
\end{equation}
By gating gradients with $M_{\text{occ}}$, we effectively sever the backpropagation graph in occluded regions, preventing the model from optimizing against hallucinating gradients. The total training objective is $\mathcal{L}_{\text{total}} = \mathcal{L}_{\text{diff}} + \lambda_\text{geo} \mathcal{L}_\text{geo}$, where $\mathcal{L}_{\text{diff}}$ is the standard diffusion loss defined in Section \ref{sect:preliminaries}.

\subsection{Training Efficiency via Parallelized Motion Computation}
While Eulerian formulation inherently implies a sequential dependency, strictly adhering to this autoregressive structure during training would incur a prohibitive linear computational cost $O(T)$. Because the groundtruth video $\mathcal{V}$ is fully observable during training, we exploit this non-causal availability to circumvent the sequential bottleneck.

We devise a parallelized motion computation strategy that reconfigures the input latent tensor $\mathcal{V} \in \mathbb{R}^{B \times T \times C \times H \times W}$ into a batched collection of temporal dyads. Particularly, we construct the sets of forward and backward pairs as $\mathcal{P}_\text{fwd} = \{(I_t, I_{t+1})\}_{t=0}^{T-2}$ and $\mathcal{P}_\text{bwd} = \{(I_{t+1}, I_t)\}_{t=0}^{T-2}$, respectively. By treating the temporal dimension as an extension of the batch dimension, we employ the shared, frozen flow estimator $\mathrm{\Phi}(\cdot; \theta_\mathrm{\Phi})$ to regress the bidirectional flow fields simultaneously:
\begin{gather}
\mathbf{F}_\text{fwd} = \mathrm{\Phi}(\mathcal{P}_\text{fwd}) \in \mathbb{R}^{B(T-1) \times 2 \times H \times W}, \\
\mathbf{F}_\text{bwd} = \mathrm{\Phi}(\mathcal{P}_\text{bwd}) \in \mathbb{R}^{B(T-1) \times 2 \times H \times W}
\end{gather}
This operation effectively flattens the sequential bottleneck of motion computation. While the total computational work remains $O(T)$, this strategy enables $O(1)$ wall-clock training time relative to the sequence length, bounded only by the parallel hardware capacity. The resulting factor $\mathbf{F}_\text{fwd}$ is then fed into the Motion Adapter $\mathcal{M}$ to condition the denoising U-Net, while the bidirectional pair $(\mathbf{F}_\text{fwd}, \mathbf{F}_\text{bwd})$ facilitates geometric consistency.

\subsection{Inference}

During inference, we model the generation process as a conditional autoregressive sampling chain. 

\noindent\textbf{Sparse-to-Dense Warping.} Given an initial frame $I_{0}$ and a set of sparse user trajectories $\mathcal{T} = \{(\mathbf{p}_i, \mathbf{v}_i)\}_{i=1}^N$, we first interpolate $\mathcal{T}$  to obtain a sparse hint map $F_{\text{sparse}}$. Motion adapter $\mathcal{M}$ acts as a generative prior to warp these sparse hints into Eulerian field $\hat{\mathbf{F}}_{\text{fwd}}$.

\noindent\textbf{Autoregressive Generative Chain.} The video sequence $V_{1:T}$ is generated via a Markov chain factorization:
\begin{equation}
p(V_{1:T} | I_0, \mathcal{C}) = \prod_{t=1}^{T} p(I_t | I_{t-1}, \hat{\mathbf{f}}_{t-1 \to t})
\end{equation}
At each step $t$, the conditional probability $p(I_{t}|\cdot)$ is sampled via the reverse diffusion process. Specifically, the latent $z_{t-1}$ is warped by the Eulerian flow $\hat{\mathbf{f}}_{t-1 \to t}$ to serve as the structural condition for synthesizing $z_{t}$. Thanks to the bounded per-step supervisory error of our Eulerian formulation, coupled with bidirectional checks, this autoregressive process prevents the structural degradation characteristic of Lagrangian warping, enabling the generation of consistent sequences with complex non-linear dynamics.
\begin{table*}[h!]
\centering
\caption{Trajectory-based image animation on WebVid test set (1000 samples). Lower is better for LPIPS/FID/FVD/$E_{warp}$; higher is better for CLIP-Cons. \textbf{Pref.} is the user-study win-rate (\%) of ours vs each baseline (95\% CI in brackets).}
\label{tab:sota_traj}
\resizebox{0.8\linewidth}{!}{
\begin{tabular}{lccccccc}
\toprule
\textbf{Method} & \textbf{Backbone} & \textbf{LPIPS} $\downarrow$ & \textbf{FID} $\downarrow$ & \textbf{FVD} $\downarrow$ & \textbf{CLIP-Cons} $\uparrow$ & \textbf{$E_{\text{warp}}$} ($\times 10^{-3}$) $\downarrow$ & \textbf{Pref.} $\uparrow$ \\
\midrule
DragNUWA~\citep{yin2023dragnuwa}        & SVD & 0.2705 & 19.66 & 91.38 & 0.9302 & 4.12 & 89.2\,{\scriptsize [85.1, 93.3]} \\
PoseTraj~\citep{ji2025posetraj}        & SVD & 0.1704 & 12.22 & 77.69 & 0.9562 & 2.58 & 63.5\,{\scriptsize [58.2, 68.8]} \\
MOFA~\citep{niu2024mofa}               & SVD & 0.2274 & 16.82 & 86.76 & 0.9390 & 3.45 & 78.4\,{\scriptsize [73.5, 83.3]} \\
SG-I2V~\citep{namekata2024sg}          & SVD & 0.2490 & 18.24 & 89.07 & 0.9346 & 3.81 & 82.1\,{\scriptsize [77.5, 86.7]} \\
I2VControl~\citep{feng2025i2vcontrol}  & MagicVideo-V2 & 0.2132 & 14.52 & 82.23 & 0.9476 & 3.10 & 71.3\,{\scriptsize [66.1, 76.5]} \\
AnyTraj~\citep{wang2025ati}            & DiT & 0.1989 & 13.75 & 80.71 & 0.9505 & 2.76 & 68.7\,{\scriptsize [63.2, 74.2]} \\
ImageConductor~\citep{li2025image}     & AnimateDiff & 0.1847 & 12.98 & 79.20 & 0.9534 & 2.51 & 65.2\,{\scriptsize [59.8, 70.6]} \\
\midrule
\textbf{Ours}                          & SVD & \textbf{0.1562} & \textbf{11.45} & \textbf{76.18} & \textbf{0.9591} & \textbf{1.84} & \textbf{92.3}\,{\scriptsize [\textbf{87.1}, \textbf{97.5}]} \\
\bottomrule
\end{tabular}}
\end{table*}

\begin{table}[t]
\centering
\caption{Keypoint-based portrait animation comparison. Lower is better for $E_{warp}$; Higher is better for CPBD/ArcFace/CLIP-Cons.}
\label{tab:sota_kpt}
\resizebox{\linewidth}{!}{
\begin{tabular}{lccccc}
\toprule
\textbf{Method} & \textbf{CPBD} $\uparrow$ & \textbf{ArcFace} $\uparrow$ & \textbf{CLIP-Cons} $\uparrow$ & \textbf{$E_{\text{warp}}$} ($\times 10^{-3}$) $\downarrow$ & \textbf{Pref.} $\uparrow$ \\
\midrule
SadTalker~\citep{zhang2023sadtalker} & 0.3218 & 0.9188 & 0.9156 & 3.88 & 88.5\,{\scriptsize [84.1, 92.9]} \\
MagicAnimate~\citep{xu2024magicanimate} & 0.3852 & 0.9241 & 0.9288 & 2.95 & 83.7\,{\scriptsize [78.5, 88.4]} \\
Cinemo~\citep{ma2024cinemo} & 0.3986 & 0.9315 & 0.9342 & 2.61 & 77.1\,{\scriptsize [71.8, 82.3]} \\
MOFA~\citep{niu2024mofa}             & 0.4075 & 0.9293 & 0.9390 & 2.45 & 79.2\,{\scriptsize [74.3, 84.1]} \\
Hallo~\citep{xu2024hallo}            & 0.3912 & 0.9331 & 0.9364 & 2.38 & 75.4\,{\scriptsize [70.1, 80.7]} \\
Hallo3~\citep{cui2025hallo3}         & 0.4045 & 0.9389 & 0.9409 & 2.21 & 72.8\,{\scriptsize [67.2, 78.4]} \\
FaceShot~\citep{gao2025faceshot}     & 0.4178 & 0.9439 & 0.9455 & 2.15 & 68.1\,{\scriptsize [62.4, 73.8]} \\
SVDP~\citep{yin2025stable}           & 0.4310 & 0.9485 & 0.9486 & 2.08 & 64.3\,{\scriptsize [58.5, 70.1]} \\
FactorPortrait~\citep{tang2025factorportrait} & 0.4343 & 0.9491 & 0.9459 & 2.10 & 61.7\,{\scriptsize [55.9, 67.5]} \\
\midrule
\textbf{Ours}                        & \textbf{0.4576} & \textbf{0.9558} & \textbf{0.9591} & \textbf{1.52} & \textbf{94.4}\,{\scriptsize [89.0, 99.7]} \\
\bottomrule
\end{tabular}}
\end{table}

\begin{figure*}[t]
    \centering
    \includegraphics[width=0.8\linewidth]{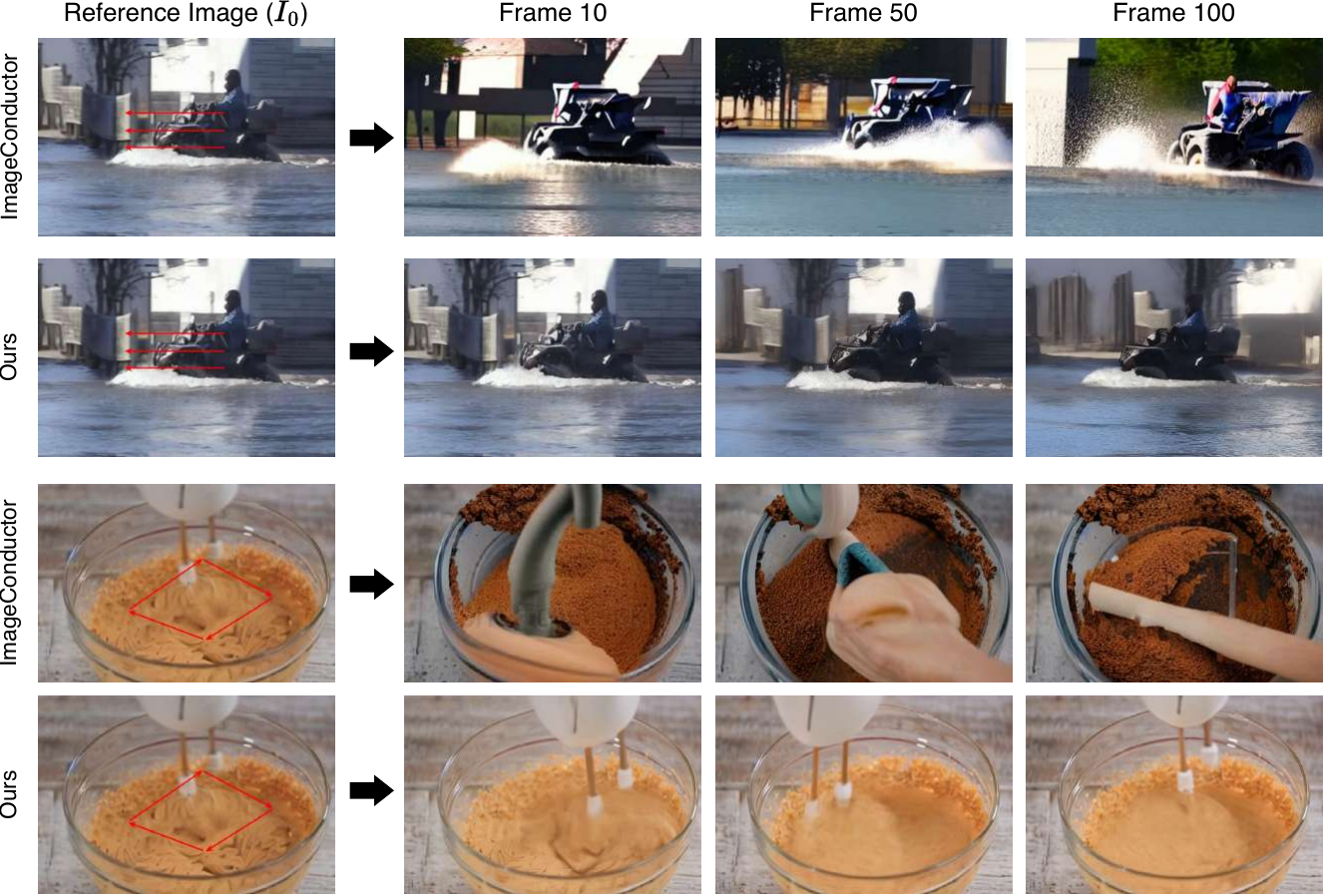}
    \caption{Qualitative comparison with ImageConductor \citep{li2025image} on long-horizon generation ($T=100$). Red arrows in $I_0$ indicate the user-defined motion trajectories. Top: ImageConductor suffers from severe structural degradation and background drift, whereas our method preserves the rider's geometry. Bottom: ImageConductor hallucinates the mixing objects, e.g. hand or sponge, due to lack of valid correspondence. In contrast, our framework maintains texture fidelity throughout the complex circular motion.}
\label{fig:qualitative_comparison_trajectory}
\vspace{-5pt}
\end{figure*}

\begin{figure*}[t]
    \centering
    \includegraphics[width=0.85\linewidth]{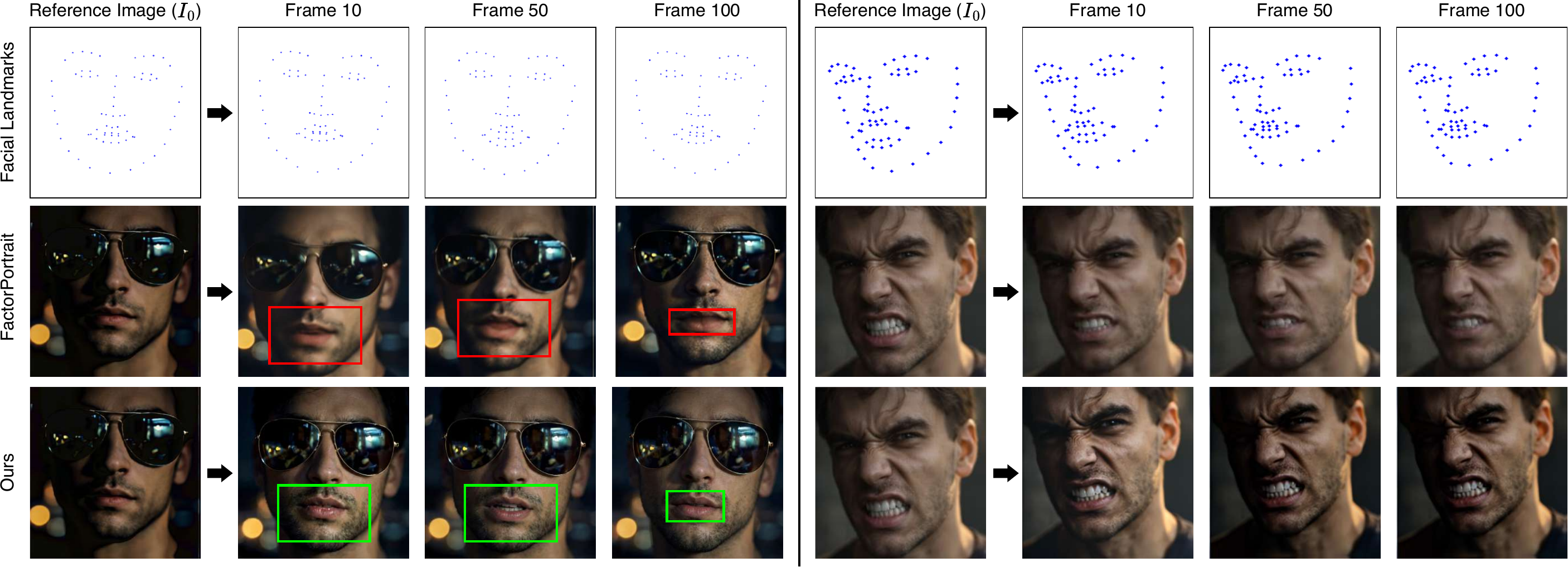}
    \caption{Qualitative comparison on keypoint-based animation. We compare our Eulerian Motion Guidance against the state-of-the-art baseline FactorPortrait \citep{tang2025factorportrait}. Left: Our method preserves high-frequency details (e.g., mouth region) where the baseline suffers from texture blurring (highlighted). This validates our claim that Eulerian flux prevents the structural degradation typical of reference-anchored warping. Right: Our method maintains identity and geometric stability even under intense expression changes.}
    \label{fig:qualitative_comparison_landmarks}
\end{figure*}

\begin{table}[t]
  \centering
  \caption{\textbf{Ablation Study.} We evaluate the contribution of Eulerian Motion Guidance and Bidirectional Geometric Consistency (BGC) on the WebVid test set.
  Lagrangian denotes a baseline trained with reference-to-target flow.
  Forward-only uses a naive forward flow magnitude mask.
  \textbf{Pref.} reports the \textit{variant win-rate} (\%) against our full model (Eulerian + BGC) in a pairwise user study (lower is better).}
  \label{tab:ablation}
  \resizebox{\linewidth}{!}{
  \renewcommand{\arraystretch}{1.1}
  \begin{tabular}{l l
                  S[table-format=1.3]
                  S[table-format=2.2]
                  S[table-format=1.3]
                  l}
    \toprule
    \textbf{Model Variant} & \textbf{Consistency Strategy} & {LPIPS $\downarrow$} & {FVD $\downarrow$} & {CLIP-Cons $\uparrow$} & \textbf{Pref.} \\
    \midrule
    Lagrangian Baseline & None & 0.224 & 85.12 & 0.935 & 12.5\% \\
    Eulerian     & None & 0.178 & 79.45 & 0.951 & 28.3\% \\
    Eulerian   & Forward-only Mask & 0.165 & 77.80 & 0.955 & 41.2\% \\
    \midrule
    Eulerian (Ours)     & \textbf{Bidirectional (BGC)} &
    {\bfseries 0.156} & {\bfseries 76.18} & {\bfseries 0.959} & -- \\
    \bottomrule
  \end{tabular}}
\end{table}

\begin{figure*}[h!]
  \centering
  \includegraphics[width=0.85\linewidth]{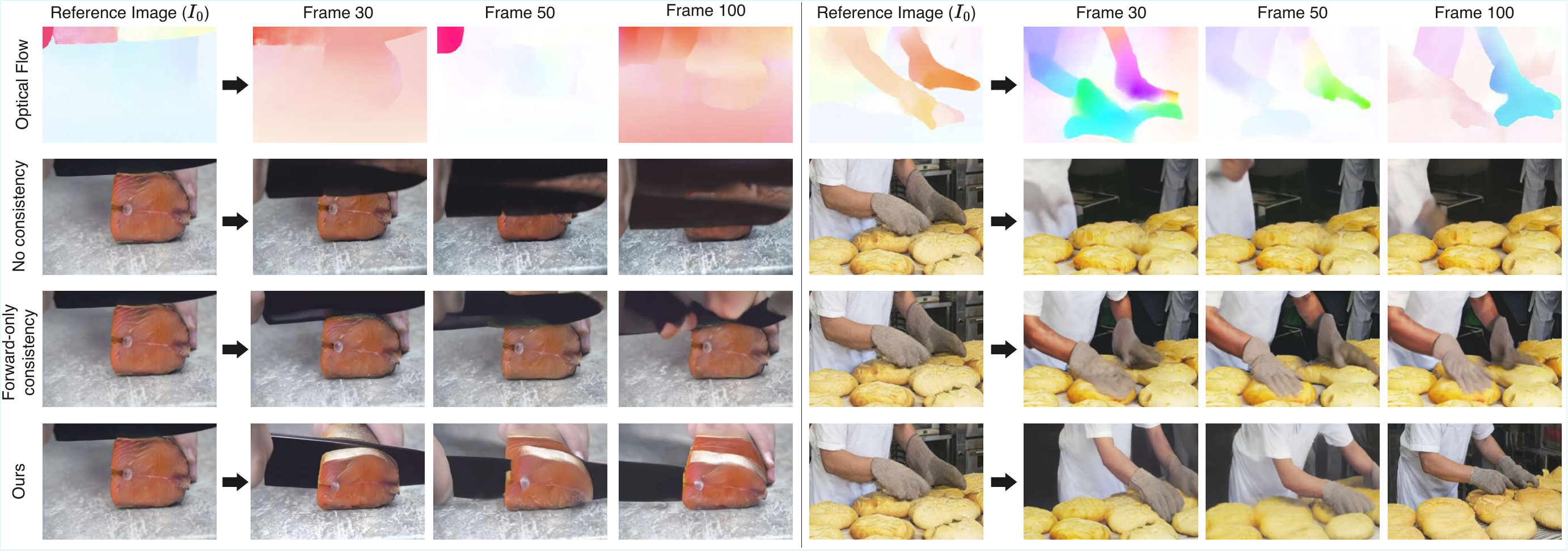}
\caption{Qualitative ablation of geometric consistency.
  We compare training with (a) no consistency enforcement, (b) forward-only masking, and (c) our bidirectional cycle-based masking (BGC),
  under the same reference image and control signal, shown at frames $t{=}30, 50, 100$.
  Without reliable masking, dis-occluded regions produce ghosting artifacts where background textures adhere to moving foreground objects.
  Forward-only masking partially mitigates drift but fails under complex occlusions.
  BGC identifies unreliable correspondences via a forward--backward cycle check and suppresses incorrect warping supervision,
  preserving structure and appearance over long horizons.}
  \label{fig:ablation_study_consistency}
\vspace{-5pt}
\end{figure*}

\begin{figure}[h!]
\centering
\begin{tikzpicture}
\begin{axis}[
    width=0.7\linewidth,
    height=4.5cm,
    xlabel={Sequence Length ($T$)},
    ylabel={Training Time (ms/iter)},
    xmin=8, xmax=48,
    ymin=0, ymax=650,
    xtick={8, 16, 24, 32, 40, 48},
    legend pos=north west,
    ymajorgrids=true,
    grid style=dashed,
    legend style={nodes={scale=0.8, transform shape}},
    label style={font=\small},
    tick label style={font=\footnotesize}
]
\addplot[
    color=red,
    mark=square*,
    thick
    ]
    coordinates {
    (8,105)(16,210)(24,315)(32,420)(40,525)(48,630)
    };
    \addlegendentry{Sequential (Baseline)}

\addplot[
    color=blue,
    mark=*,
    thick
    ]
    coordinates {
    (8,110)(16,112)(24,115)(32,118)(40,122)(48,128)
    };
    \addlegendentry{Parallel (Ours)}
\end{axis}
\end{tikzpicture}
\caption{\textbf{Training Efficiency Analysis.} comparison of per-iteration training time between standard sequential flow estimation (Red) and our parallelized strategy (Blue). Our method achieves $\mathcal{O}(1)$ complexity relative to sequence length, yielding a ${\sim}3\times$ speedup at $T=24$.}
\label{fig:efficiency_plot}
  \vspace{-15pt}
\end{figure}

\section{Experiments}
\subsection{Experimental Setup}
\noindent\textbf{Implementation Details.} We adopt Stable Video Diffusion (SVD) \citep{blattmann2023stable} as a frozen generative backbone and train only the FlowControlNet on WebVid-10M using AdamW \citep{loshchilov2017decoupled} with a learning rate of $2 \times 10^{-5}$ at a resolution of $256 \times 256$. For our shared flow estimator $\mathrm{\Phi}$, we utilize RAFT \citep{teed2020raft} pre-trained on FlyingThings3D, without further fine-tuning, operating at $256 \times 256$ resolution. The geometric consistency weights are empirically set to $\alpha_{1} = 0.01$ and $\alpha_{2} = 0.5$. All experiments are conducted on 4 NVIDIA H200 GPUs. At inference time, we perform image animation by generating sequences of $T = 100$ frames. 

\noindent\textbf{Evaluation Metrics.} We evaluate our framework on both trajectory-based and keypoint-based image animation. For trajectory-based image animation, we randomly sample 1000 instances from the WebVid test set and report LPIPS \citep{zhang2018perceptual}, FID \citep{heusel2017gans}, and FVD \citep{yu2022generating}. Specifically, we compute FID and FVD by matching the distribution of our generated 100-frame sequences—decoded at a uniform 8 fps—directly against the ground-truth WebVid clips to ensure spatial and temporal distributions are strictly aligned. To assess temporal consistency, we compute the cosine similarity between CLIP embeddings \citep{radford2021learning} of consecutive generated frames. Because CLIP-Cons can inadvertently penalize intended large motions, we also report the Warping Error ($E_{\text{warp}}$) \citep{lai2018learning}, which measures the pixel-level photometric error between consecutive frames after alignment via optical flow, thereby decoupling temporal consistency from motion magnitude. For keypoint-based image animation, we evaluate 40 generated videos, each of which contains 196 frames, then report Cumulative Probability Blur Detection (CPBD) \citep{narvekar2011no} to measure sharpness, as well as identity preservation via ArcFace \citep{deng2019arcface} between the source images and the generated frames. In addition to quantitative results, we also conduct user studies to compare perceptual quality against competing baselines.

\subsection{Comparison with State-of-the-art Methods}

In this section, we compare our results with the state-of-the-art methods tailored to each control setting. For all baselines, we follow the official implementations when available and keep the evaluation protocol identical across methods. Quantitative results are reported in Table \ref{tab:sota_traj} and Table \ref{tab:sota_kpt}. Qualitative results are provided in Figure \ref{fig:qualitative_comparison_trajectory} and Figure \ref{fig:qualitative_comparison_landmarks}.

\noindent\textbf{Trajectory-based Image Animation.} We compare our method against recent trajectory-based video generation and image-to-video control methods, including 1) trajectory-guided diffusion fine-tuning approaches that directly inject sparse trajectories into the generation process, i.e. DragNUWA \citep{yin2023dragnuwa} and PoseTraj \citep{ji2025posetraj}; and 2) trajectory control methods built on top of the pretrained image-to-video backbones, i.e. MOFA \citep{niu2024mofa}, SG-I2V \citep{namekata2024sg}, I2VControl \citep{feng2025i2vcontrol}, AnyTraj \citep{wang2025ati}, and ImageConductor \citep{li2025image}. Following prior works, we evaluate both object dragging and camera motion scenarios. For methods that require trajectory formats, e.g. bounding box trajectories or sparse point sets, we convert our user-specified trajectories into the required representation while preserving the same spatial path and temporal schedule. As shown in Table \ref{tab:sota_traj}, we achieve substantially better perceptual scores, achieving the lowest LPIPS, FID, and FVD scores. Notably, we outperform the strongest baseline, i.e. ImageConductor \citep{li2025image}, by a significant margin of 1.53 in FID and 3.02 in FVD, validating the efficacy of our parallelized Eulerian formulation in generating temporally coherent video distributions. Crucially, addressing recent concerns regarding the limitations of CLIP-Cons in measuring alignment under large motions, our framework achieves the lowest Warping Error ($E_{warp} = 1.84 \times 10^{-3}$). This confirms that our Eulerian formulation and Bidirectional Geometric Consistency directly translate to superior pixel-level temporal stability, minimizing texture flickering even during rapid object translations.

In addition to quantitative comparison, qualitative comparison also reveals distinct advantages in long-horizon consistency, where Lagrangian-based methods typically struggle. In Figure \ref{fig:qualitative_comparison_mofa}, as temporal distance increases, MOFA exhibits severe textual drift and identity degradation. For instance, the structural identity of the red bear (row 1) and the boat's wake (row 2) collapses by frame 100 due to error accumulation in reference-anchored flow. In contrast, our Eulerian Motion Guidance maintains sharp structural details and clear object boundaries throughout the sequence. Moreover, in the jet-ski example (top row) of Figure \ref{fig:qualitative_comparison_trajectory}, ImageConductor fails to maintain the rider's geometry, leading to the rider vanishing into the background by frame 50. Our method preserves the rider's silhouette against the fast-moving water. Furthermore, in the mixing bowl scene (bottom row), ImageConductor suffers from hallucination artifacts, generating unrealistic objects in the batter due to occlusion ambiguities. By leveraging Bidirectional Geometric Consistency to mask unreliable regions, our model generates plausible fluid dynamics without such hallucinations.

\noindent\textbf{Keypoint-based Image Animation.} We compare our method with strong portrait or talking-head baselines that specialize in keypoint-based image animation, including classical audio-to-motion pipelines and recent diffusion-based portrait animators, i.e. SadTalker \citep{zhang2023sadtalker}, MagicAnimate \citep{xu2024magicanimate}, Cinemo \citep{ma2024cinemo}, StyleHeat \citep{yin2022styleheat}, Hallo \citep{xu2024hallo}, Hallo3 \citep{cui2025hallo3}, FaceShot \citep{gao2025faceshot}, and recent video-driven portrait methods, including Stable Video-Driven Portraits (SVDP) \citep{yin2025stable} and FactorPortrait \citep{tang2025factorportrait}. Table \ref{tab:sota_kpt} demonstrates that our method outperforms specialized portrait animators. We achieve the highest CPBD score, indicating superior frame sharpness, and the highest ArcFace score, reflecting better identity preservation than recent methods like Hallo3 \citep{cui2025hallo3}, Cinemo \citep{ma2024cinemo}, and FactorPortrait \citep{tang2025factorportrait}.

Furthermore, qualitative comparison in Figure \ref{fig:qualitative_comparison_landmarks} with FactorPortrait reveals our method's ability to retain fine facial details. In the sunglasses' subject (left), the baseline struggles with the dis-occluded mouth region, resulting in blurred teeth and undefined lip boundaries (highlighted in red boxes). Our method utilizes the Eulerian flux to propagate texture information from adjacent frames, yielding sharp, high-fidelity mouth movements (highlighted in green boxes). In the angry expression subject (right), our method maintains geometric stability under large deformations. While the baseline introduces temporal flicker and boundary artifacts around the jawline during intense expressions, our Bidirectional Geometric Consistency effectively filters inconsistent warping gradients, ensuring subject's identity remains stable during facial contortions.

\noindent\textbf{User Studies.} In addition to objective metrics, we conduct a user study to evaluate perceived visual quality, temporal stability, and control faithfulness. Participants are shown randomized pairwise comparisons between our method and each baseline under the same input and control signals. We ask them to select the preferred output, and report win rates and confidence intervals in Table \ref{tab:sota_traj} and \ref{tab:sota_kpt}. We can observe that the volunteers prefer the proposed framework obtains the highest preferred rates across all approaches.

\vspace{-5pt}
\subsection{Ablation Study}
To validate the effectiveness of our proposed contributions, we conduct ablation studies on the WebVid test set. We analyze three core aspects: training efficiency, motion guidance formulation, and geometric consistency.

\noindent\textbf{Training Efficiency.} 
A key advantage of our Eulerian formulation is the ability to parallelize motion estimation (Sec.~4.3). We compare the training iteration time of our method against a standard sequential implementation where flow is estimated autoregressively. Figure~\ref{fig:efficiency_plot} shows that the sequential baseline scales linearly $\mathcal{O}(T)$, creating a computational bottleneck for long sequences. In contrast, our parallelized strategy maintains nearly constant time complexity $\mathcal{O}(1)$ (relative to temporal hardware capacity). For a sequence length of $T=24$, our method reduces iteration time from 315ms to 115ms, achieving a $2.7\times$ speedup.

\noindent\textbf{Effectiveness of Eulerian Motion Guidance.} 
We train a Lagrangian variant that predicts displacements relative to the initial frame $I_0$ while keeping the SVD backbone and all other settings identical. As shown in Table~\ref{tab:ablation}, the Lagrangian baseline degrades significantly (LPIPS rises to 0.224, FVD to 85.12, CLIP-Cons drops to 0.935), confirming that reference-anchored flow suffers from error accumulation and vanishing valid correspondences over long horizons. Our Eulerian formulation bounds per-step error (Theorem~1 and~2), preserving sharp textures and structural fidelity throughout 100-frame sequences.

\noindent\textbf{Impact of Bidirectional Geometric Consistency.} 
The Eulerian formulation is prone to drift in dis-occluded regions. We compare three strategies on the Eulerian baseline: (1)~no consistency check (full-pixel supervision), (2)~forward-only magnitude masking, and (3)~our BGC using forward-backward cycle energy $E_{\text{cycle}}$. Removing consistency increases LPIPS to 0.178. The forward-only heuristic helps modestly but fails under rotation and complex occlusions. BGC achieves the best scores (LPIPS 0.156, FVD 76.18, CLIP-Cons 0.959) by gating unreliable gradients. Figure~\ref{fig:ablation_study_consistency} shows that BGC eliminates ghosting and boundary artifacts, cleanly separating foreground from newly revealed background. More comprehensive analysis can be found in Appendix \ref{app:sensitivity_analysis}.
\section{Conclusion}
In this paper, we identify a fundamental theoretical limitation in existing image animation frameworks, i.e. the unbounded error growth inherent in reference-anchored Lagrangian motion guidance. To address this, we propose Eulerian Motion Guidance, a novel paradigm that formulates animation as a sequence of adjacent, bounded-error flow transitions. We further introduce the Bidirectional Geometric Consistency mechanism, which utilizes a forward-backward cycle energy check to mathematically detect and mask unreliable gradients in dis-occluded regions. Extensive experiments on trajectory-based and keypoint-based tasks demonstrate that our approach significantly outperforms recent state-of-the-art methods. Our method not only generates sharper textures over long horizons but also eliminates the artifacts common in complex dynamic scenes.

%% The next two lines define the bibliography style to be used, and
%% the bibliography file.
\bibliographystyle{ACM-Reference-Format}
\bibliography{main}
\clearpage

%%
%% If your work has an appendix, this is the place to put it.
\appendix
\section{Proof of Theorem 1}
\label{sec:appendix_a}

\begin{customthm}{1.}[Expected Error Accumulation in Lagrangian Motion Fields]
Let $\mathbf{u}^{*}_{0\to t}$ be the ground-truth displacement from frame $0$ to frame $t$. Under the standard assumption that the estimator $\hat{\mathbf{u}}_{0\to t} = \mathbf{u}^{*}_{0\to t} + \boldsymbol{\epsilon}_{t}$ exhibits an error variance that scales with the temporal baseline (due to growing deformation and decreasing correspondences), such that $\mathbb{E}[\boldsymbol{\epsilon}_{t}]=\mathbf{0}$ and $\mathbb{E}[\|\boldsymbol{\epsilon}_{t}\|^{2}] \ge \sigma^{2} t$ for some $\sigma>0$, the expected endpoint error is lower bounded by
\begin{equation}
\mathbb{E}\!\left[\left\|\hat{\mathbf{u}}_{0\to t}-\mathbf{u}^{*}_{0\to t}\right\|\right]
\;\ge\; \sigma \sqrt{t}.
\end{equation}
\end{customthm}

\begin{proof}
Let $Z = ||\epsilon_{t}||$ denote the error magnitude. By our variance-scaling assumption, the second moment satisfies $\mathbb{E}[Z^2] \ge \sigma^2 t$.
While a large second moment implies a large Mean Squared Error (MSE), it does not strictly imply a large expected L1 error ($\mathbb{E}[Z]$) without constraints on the distribution's heavy-tailedness. We utilize the Paley-Zygmund inequality to strictly lower bound the first moment.

Let $X = Z^2$. Using the Paley-Zygmund inequality with $\theta = 0.5$, we have:
\begin{equation}
    \mathbb{P}(X > 0.5 \mathbb{E}[X]) \ge \frac{(1-0.5)^2 \mathbb{E}[X]^2}{\mathbb{E}[X^2]}.
\end{equation}
Using the bounded kurtosis assumption $\mathbb{E}[X^2] = \mathbb{E}[Z^4] \le \kappa (\mathbb{E}[Z^2])^2 = \kappa \mathbb{E}[X]^2$, we obtain:
\begin{equation}
    \mathbb{P}(Z^2 > 0.5 \mathbb{E}[Z^2]) \ge \frac{0.25}{\kappa} = \frac{1}{4\kappa}.
\end{equation}
This inequality states that with probability at least $\frac{1}{4\kappa}$, the squared error is significant. We can now lower bound the expected error:
\begin{equation}
\begin{aligned}
    \mathbb{E}[Z] &\ge \mathbb{E}[Z \cdot \mathbb{I}(Z^2 > 0.5\mathbb{E}[Z^2])] \\
    &\ge \sqrt{0.5\mathbb{E}[Z^2]} \cdot \mathbb{P}(Z^2 > 0.5\mathbb{E}[Z^2]) \\
    &\ge \frac{1}{\sqrt{2}} \sigma\sqrt{t} \cdot \frac{1}{4\kappa} \\
    &= \frac{1}{4\sqrt{2}\kappa}\sigma\sqrt{t}.
\end{aligned}
\end{equation}
As $t\rightarrow\infty$, the term $\sigma\sqrt{t}\rightarrow\infty$. Thus, if the error variance scales with time, the expected endpoint error of the Lagrangian estimator diverges at a rate of $\mathrm{\Omega}(\sqrt{t})$, explaining the empirical drift observed in long-horizon generation.
\end{proof}

\section{Proof of Theorem 2}
\label{app:proof-thm2}
\begin{customthm}{2.}[Uniform Bound on Eulerian Supervisory Error]
Let $\mathbf{f}^{*}_{t\to t+1}$ be the ground-truth adjacent-frame displacement, and let
$\hat{\mathbf{f}}_{t\to t+1} = \mathbf{f}^{*}_{t\to t+1} + \boldsymbol{\epsilon}_{t}$
be an estimator whose error satisfies $\mathbb{E}[\boldsymbol{\epsilon}_{t}]=\mathbf{0}$ and
\begin{equation}
\mathbb{E}\!\left[\frac{1}{|\mathrm{\Omega}_{\mathrm{valid}}(t)|}\int_{\mathrm{\Omega}_{\mathrm{valid}}(t)}
\left\|\boldsymbol{\epsilon}_{t}(\mathbf{x})\right\|^{2}\, d\mathbf{x}\right] \;\le\; \sigma^{2},
\qquad \forall t,
\end{equation}
for some constant $\sigma>0$ independent of $t$.
Then the expected endpoint error of Eulerian supervision is uniformly bounded:
\begin{equation}
\mathbb{E}\!\left[\frac{1}{|\mathrm{\Omega}_{\mathrm{valid}}(t)|}\int_{\mathrm{\Omega}_{\mathrm{valid}}(t)}
\left\|\hat{\mathbf{f}}_{t\to t+1}(\mathbf{x})-\mathbf{f}^{*}_{t\to t+1}(\mathbf{x})\right\|\, d\mathbf{x}\right]
\;\le\; \sigma,
\qquad \forall t.
\end{equation}
\end{customthm}
\begin{proof}
Define the pointwise flow error field
\[
\varepsilon_t(x) := \hat f_{t\to t+1}(x) - f^{*}_{t\to t+1}(x),
\quad x\in\mathrm{\Omega}_{\mathrm{valid}}(t).
\]
Let
\[
A_t := \left\langle \|\varepsilon_t(\cdot)\| \right\rangle_{\mathrm{\Omega}_{\mathrm{valid}}(t)}
= \frac{1}{|\mathrm{\Omega}_{\mathrm{valid}}(t)|}\int_{\mathrm{\Omega}_{\mathrm{valid}}(t)} \|\varepsilon_t(x)\|\,dx.
\]
By Cauchy--Schwarz on $\mathrm{\Omega}_{\mathrm{valid}}(t)$,
\[
\left(\int_{\mathrm{\Omega}_{\mathrm{valid}}(t)} \|\varepsilon_t(x)\|\,dx\right)^2
\le
|\mathrm{\Omega}_{\mathrm{valid}}(t)|\int_{\mathrm{\Omega}_{\mathrm{valid}}(t)} \|\varepsilon_t(x)\|^2\,dx.
\]
Dividing both sides by $|\mathrm{\Omega}_{\mathrm{valid}}(t)|^2$ yields the deterministic inequality
\[
A_t^2 \;\le\;
\left\langle \|\varepsilon_t(\cdot)\|^2 \right\rangle_{\mathrm{\Omega}_{\mathrm{valid}}(t)}.
\]
Taking expectation and using the assumption of Theorem 2 gives
\[
\mathbb{E}[A_t^2]
\le
\mathbb{E}\!\left[\left\langle \|\varepsilon_t(\cdot)\|^2 \right\rangle_{\mathrm{\Omega}_{\mathrm{valid}}(t)}\right]
\le \sigma^2.
\]
Finally, since $A_t\ge 0$, Jensen (equivalently $\mathbb{E}[A_t]\le \sqrt{\mathbb{E}[A_t^2]}$)
implies
\[
\mathbb{E}[A_t] \;\le\; \sqrt{\mathbb{E}[A_t^2]} \;\le\; \sigma,
\]
which is exactly the claimed uniform bound for all $t$.
\end{proof}

\section{More Visual Results}
We provide extended illustrations to further examine the behavior of our method under diverse motion conditions in Figure \ref{fig:appendix_more_visual_results_optical_flow} and \ref{fig:appendix_more_visual_results_landmarks}.
\begin{figure}[h!]
  \centering
  % Replace with your actual file path:
  \includegraphics[width=0.84\linewidth]{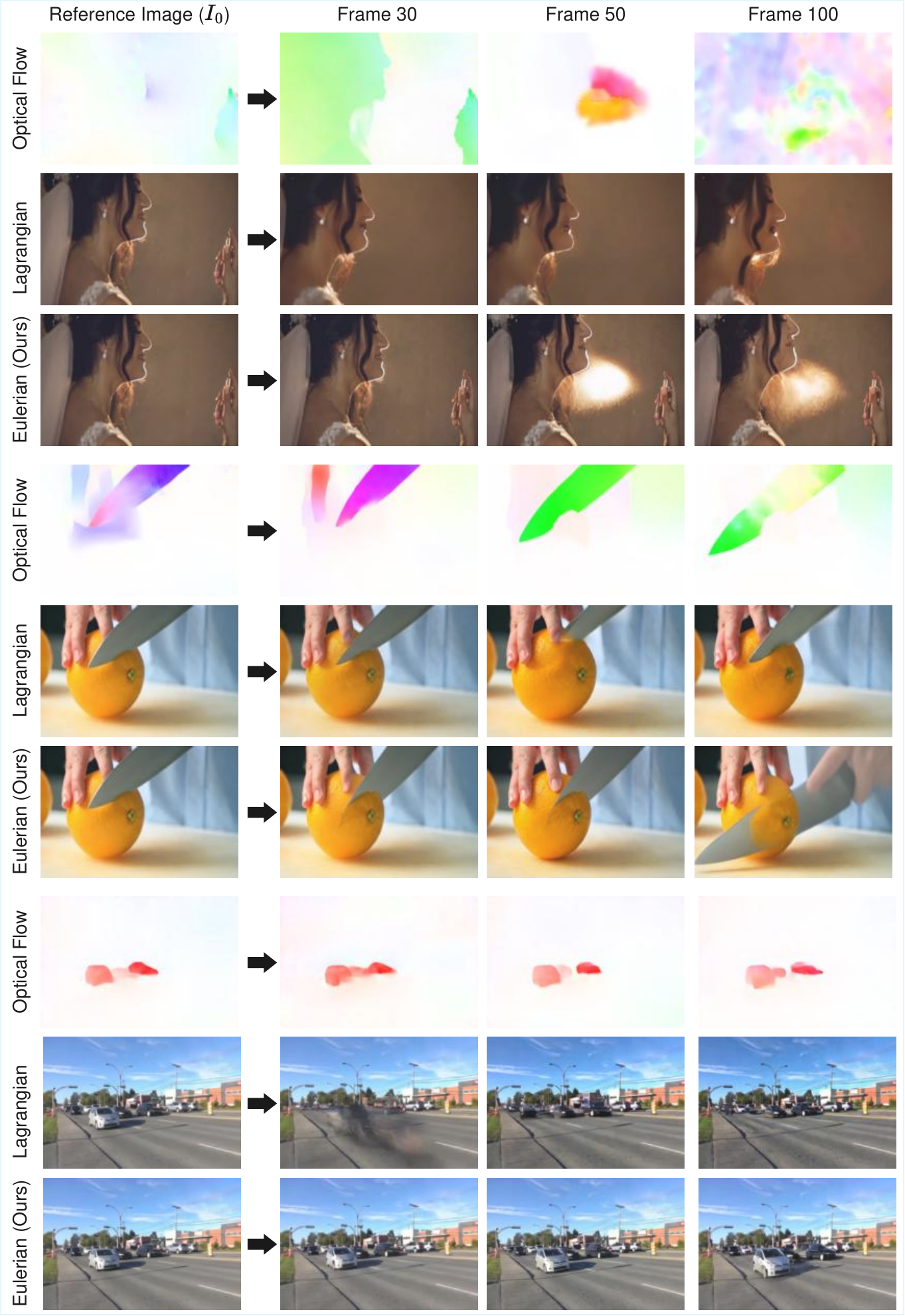}
  \caption{Robustness to Large Displacement. We compare Eulerian Motion Guidance against the Lagrangian baseline on open-domain video generation. Row 2 visualizes the optical flow magnitude, illustrating high-motion regions. In the Lagrangian baseline (Row 3), the structure of the subject degrades significantly by Frame 100 due to drift. In contrast, our Eulerian approach (Row 4) maintains sharp boundaries and coherent textures throughout the generation window.}
  \label{fig:appendix_more_visual_results_optical_flow}
\end{figure}
\begin{figure}[h!]
  \centering
  % Replace with your actual file path:
  \includegraphics[width=0.84\linewidth]{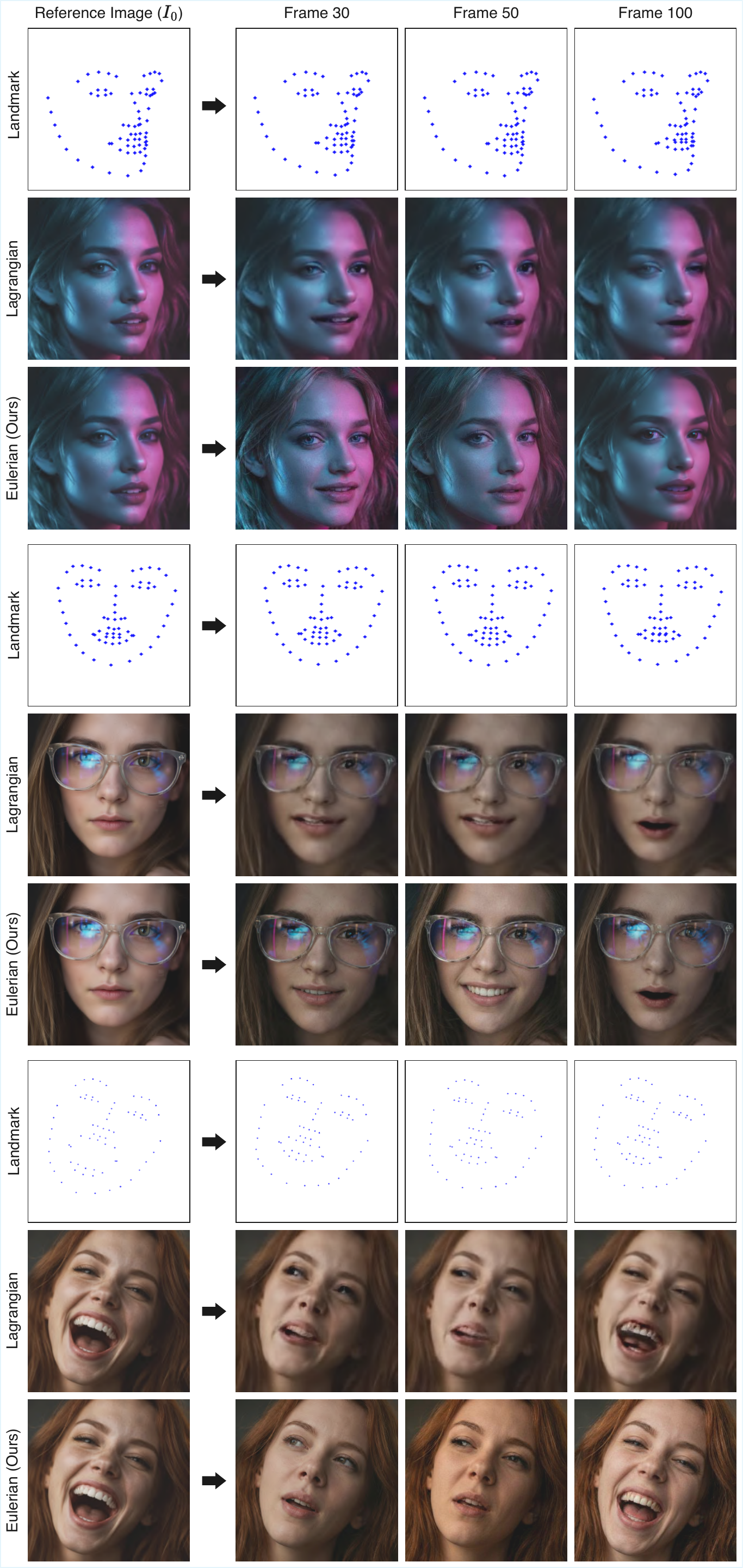}
  \caption{Extended Qualitative Evaluation on Landmark-Guided Portrait Animation. We compare our method against a standard Lagrangian baseline driven by the same sparse facial landmarks. (Top Group) Under complex lighting, the Lagrangian baseline loses skin texture detail and identity by Frame 100, whereas our method preserves the subject's likeness. (Middle Group) The Lagrangian approach struggles with rigid objects attached to the face; note the severe distortion of the eyeglasses in the later frames (Lagrangian) compared to the geometric stability maintained by our Eulerian method. (Bottom Group) In high-dynamic expressions (laughing), our method prevents the ``blurring'' of teeth and mouth interior often observed in reference warping.}
  \label{fig:appendix_more_visual_results_landmarks}
  \vspace{-15pt}
\end{figure}

\section{Sensitivity Analysis of Geometric Consistency}
\label{app:sensitivity_analysis}
As introduced in Section \ref{subsect:bgc}, our Bidirectional Geometric Consistency (BGC) mask relies on a dynamic threshold defined by $\alpha_{1}$ and $\alpha_{2}$ to identify valid correspondences. To validate the robustness of these hyperparameters and understand their interaction with motion magnitude and texture complexity, we conduct a comprehensive sensitivity analysis.

\noindent\textbf{Hyperparameter Robustness.} The threshold balances a dynamic motion-dependent term ($\alpha_{1}$) and a static noise floor ($\alpha_{2}$). In Table \ref{tab:sensitivity_alpha}, we vary $\alpha_{1} \in \{0.005, 0.01, 0.05\}$ and $\alpha_{2} \in \{0.1, 0.5, 1.0\}$ and report the resulting LPIPS and FVD on a 200-video subset of the WebVid test set. We observe that our framework is relatively stable across a moderate range of values. Setting $\alpha_{2}$ too low ($0.1$) makes the mask overly aggressive, discarding valid gradients and slightly degrading FVD, whereas setting $\alpha_{1}$ too high ($0.05$) makes the mask too permissive, allowing forward-backward drift to contaminate the training signal, which increases LPIPS. The empirical choice of $\alpha_{1}=0.01$ and $\alpha_{2}=0.5$ provides the optimal trade-off between structural preservation and training efficiency.

\begin{table}[h!]
  \centering
  \caption{\textbf{Sensitivity Analysis of BGC Thresholds.} Evaluation of LPIPS ($\downarrow$) and FVD ($\downarrow$) under varying $\alpha_{1}$ and $\alpha_{2}$ values. The model is highly robust near our default settings ($\alpha_{1}=0.01, \alpha_{2}=0.5$).}
  \label{tab:sensitivity_alpha}
  \resizebox{0.6\linewidth}{!}{
  \renewcommand{\arraystretch}{1.1}
  \begin{tabular}{cc|cc}
    \toprule
    $\alpha_{1}$ (Dynamic) & $\alpha_{2}$ (Static) & \textbf{LPIPS} $\downarrow$ & \textbf{FVD} $\downarrow$ \\
    \midrule
    0.005 & 0.5 & 0.1581 & 77.05 \\
    \textbf{0.010} & \textbf{0.5} & \textbf{0.1562} & \textbf{76.18} \\
    0.050 & 0.5 & 0.1634 & 78.42 \\
    \midrule
    0.010 & 0.1 & 0.1578 & 79.11 \\
    0.010 & 1.0 & 0.1610 & 77.85 \\
    \bottomrule
  \end{tabular}}
  \vspace{-10pt}
\end{table}

\noindent\textbf{Interaction with Motion Magnitude.} 
The dynamic formulation of the occlusion threshold is specifically designed to adapt to varying motion scales . In regions with extreme motion trajectories, the inherent interpolation variance of the flow estimator $\mathrm{\Phi}$ increases, leading to a naturally higher cycle error even in non-occluded regions. By scaling the threshold with the flow magnitude via $\alpha_{1} \| \mathbf{f} \|_{2}^{2}$, the mask automatically relaxes its strictness for fast-moving objects, preventing the erroneous masking of high-velocity foreground elements. As demonstrated in our trajectory-based section \ref{subsect:bgc}, this prevents the ``\textit{vanishing subject}'' artifacts commonly seen in fixed-threshold Eulerian approaches.

\noindent\textbf{Interaction with Texture Complexity.} 
The static threshold component, $\alpha_{2}$, acts as an essential noise floor for texture variations. In regions with high-frequency textures, e.g., foliage, intricate clothing, sub-pixel flow estimation becomes inherently noisy. Conversely, in textureless regions, e.g., plain walls, clear skies, the aperture problem causes flow ambiguity, leading to small but non-zero forward-backward cycle errors. The parameter $\alpha_{2}$ ensures that these minor, texture-induced flow variations are not incorrectly flagged as dis-occlusions. During our qualitative analysis, removing $\alpha_{2}$ resulted in a highly fragmented gradient mask, leading to localized blurring in textured regions due to intermittent supervision.

\end{document}